\renewcommand{\xi}[1][i]{\mathbf{x}^{(#1)}}                                          
\newcommand{\loss}{\mathcal{L}}
\newcommand{\idp}{\Perp}
\newcommand{\dpd}{\not \idp}
\begin{document}
\title{Decomposition of Global Feature Importance into Direct and Associative Components (DEDACT)\thanks{This project is funded by the German Federal Ministry of Education and Research (BMBF) under Grant No. 01IS18036A. The authors of this work take full responsibility for it's content.}}
\titlerunning{DEDACT}
%
\author{Gunnar König\inst{1,2}\orcidID{0000-0001-6141-4942} \and
Timo Freiesleben\inst{1}\orcidID{0000-0003-1338-3293} \and\\
Bernd Bischl\inst{1}\orcidID{0000-0001-6002-6980} \and
Giuseppe Casalicchio\inst{1}\orcidID{0000-0001-5324-5966} \and
Moritz Grosse-Wentrup\inst{2}\orcidID{0000-0001-9787-2291}
}

\authorrunning{G. König et al.}
%
\institute{Ludwig-Maximilian University Munich,  Germany \and 
  University of Vienna, Austria}

\maketitle              
\begin{abstract}
Global model-agnostic feature importance measures either quantify whether features are directly used for a model’s predictions (direct importance) or whether they contain prediction-relevant information (associative importance).
Direct importance provides causal insight into the model's mechanism, yet it fails to expose the leakage of information from associated but not directly used variables. In contrast, associative importance exposes information leakage but does not provide causal insight into the model's mechanism.
We introduce DEDACT – a framework to decompose well-established direct and associative importance measures into their respective associative and direct components. DEDACT provides insight into both the sources of prediction-relevant information in the data and the direct and indirect feature pathways by which the information enters the model. We demonstrate the method's usefulness on simulated examples.
\keywords{Interpretable Machine Learning  \and Indirect Influence}
\end{abstract}

\section{Introduction}
\label{sec:introduction}

Black-box machine learning models are increasingly deployed in high stakes environments such as hiring, criminal justice or medical diagnosis \cite{chalfin2016productivity,berk2012criminal,rajkomar2019machine}. Despite providing accurate predictions in a test environment, their application can be harmful as their predictions may rely on spuriously correlated variables or protected attributes. In such cases, models can discriminate disadvantaged groups \cite{wexler_rebecca_when_2017} or make inaccurate predictions outside the test environment \cite{zech_variable_2018}.\\
A range of Interpretable Machine Learning (IML) techniques provide insight into the model's decision-making by quantifying the importance of features for the model's performance \cite{molnar2019}.
Measures of feature importance differ in terms of their consideration of the dependence structure in the underlying dataset. While some methods only quantify the direct effect of features others also deem variables important that contain prediction-relevant information, irrespective of their direct utilization by the model. We will refer to the former as direct importance measures and to the latter as associative importance measures.\\
While direct importance measures allow causal insight into the model's mechanism, they fail to expose leakage of information from associated but not directly used variables\cite{Adler2018}. In contrast, associative importance methods expose leakage of information but do not provide insight into the model's mechanism\cite{janzing2020feature}. However, 
even in combination, direct and associative importance cannot provide both insight into the sources of prediction-relevant information \textit{and} the direct and indirect feature pathways by which the information enters the model.\\
Consider a simple illustrative example (Figure \ref{fig:example-intro}): Features \textit{zip code} and \textit{job experience} are directly used by the model to compute the prediction $\hat{Y}$. The feature \textit{ethnicity} is associated with the variable \textit{zip code} (blue arrow) and therefore with $\hat{Y}$; however, \textit{ethnicity} is not directly used by the model.
\begin{figure}[H]
    \centering
    \begin{tikzpicture}[thick, scale=0.7, every node/.style={scale=.6, line width=0.25mm, black, fill=white}]
    \usetikzlibrary{shapes}
		\node[draw, ellipse, scale=0.9] (x1) at (-4, .8) {job experience};
		\node[draw, ellipse, scale=0.9] (x2) at (0, .8) {zip code};
		\node[draw, ellipse, scale=0.9] (x3) at (4,.8) {ethnicity};
		\node[draw, ellipse, scale=0.9] (y) at (0,0) {$\hat{Y}$: predicted job performance};
		\draw[->, green] (x1) -- (y);
		\draw[->, green] (x2) -- (y);
		\draw[->, blue] (x3) -- (x2);
    \end{tikzpicture}
    \caption{Directed Acyclic Graph (DAG) illustrating the data generating process.}
    \label{fig:example-intro}
\end{figure}
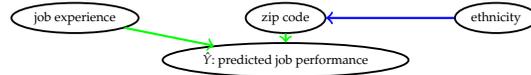
\noindent Direct importance measures such as Permutation Feature Importance (PFI) \cite{Breiman2001rf} or marginal SAGE \cite{covert_understanding_2020} consider the feature \textit{race} irrelevant, as it is not directly causal for the model's predictions. 
As a result, they fail to expose the leakage of information from \textit{ethnicity} via the proxy variable \textit{zip code}.
Measures of associative importance such as conditional SAGE \cite{covert_understanding_2020} 
would expose the association and consider \textit{ethnicity} to be relevant. 
However, even in combination with direct importance, associative importance does not provide insight into via which of the directly important features (\textit{zip code} or \textit{job experience}) the protected variable leaks into the model. Consequently we are not guided towards sparse interventions that eliminate the association from the model.\\
We propose a framework (\underline{De}composition of Global Feature Importance into \underline{D}irect and \underline{A}ssociative \underline{C}omponen\underline{t}s, DEDACT) that enables a holistic understanding of both sources of prediction-relevant information \textit{and} feature pathways which allow the respective sources of information to enter the model's predictions. With DEDACT one can either decompose the direct importance of a feature into the contributions of its sources of information, or one can decompose the associative importance of an unwanted influence into its direct feature pathway components.\\
DEDACT does not require knowledge of the underlying dependence structure or causal graph. As is the case with associative importance measures such as conditional SAGE \cite{covert_understanding_2020}, DEDACT requires sampling from conditional distributions. 
These samples can be reused for the decomposition; in this way the computational overhead can be reduced.\\
Before the decomposition will be introduced in Section \ref{sec:decomposition}, we will formally define associative and direct importance in Section \ref{sec:disentangling-direct-and-indirect} and measures for the respective direct and associative components in Section \ref{sec:iai-via-and-di-from}. The method will be validated on simulated data in Section \ref{sec:application}.
%

\subsection{Related Work and Contributions}
Previous research on local SHAP based feature effect explanations have studied direct and indirect effects. Heskes et al. \cite{heskes_causal_2020} decompose causal Shapley value functions into one direct and one indirect component. Our method is applicable to a wide range of global direct and associative feature importance methods and allows their full decomposition into feature components. Wang et al. \cite{wang_shapley_2020} offer a graph-based decomposition of direct and indirect causal effects for local causal SHAP explanations. However, their approach (1) cannot be applied to direct or associative importance methods, or (2) to feature importance methods and (3) requires knowledge of the underlying causal graph. Relative Feature Importance \cite{konig_relative_2021} quantifies the direct importance of a variable that cannot be attributed to a user specified set of variables. However, the method has not been applied to decompose the importance of a feature into its prediction-relevant sources of information. Moreover, it does not allow to decompose the associative importance of a variable. Our contribution is the first decomposition of global direct and associative importance measures into their respective associative or direct components, and therefore provides novel insight into both model and data.\\
\section{Notation}
\label{sec:background-and-notation}
The Table in Figure \ref{fig:notation} summarizes the mathematical notation and terms used throughout the article.
In order to formalize a distinction between direct and associative importance we differentiate between variables on the data level and features on the model level. We refer to the real-world concept as variable $X_j$ and to the corresponding model input as feature $\underline{X}_j$ (Figure \ref{fig:notation}, left panel). Although in the standard setting variable and feature take the same values, they are conceptually different. If we say that we intervene on a feature, it only means that we change the input to the respective feature. If we say that a feature is not causal for the prediction, the corresponding real world concept may still be causal since it could indirectly affect the prediction via downstream variables. Following \cite{janzing2020feature}, we denote perturbations of the features using the do-operator \cite{pearl2009causality}, i.e. the term $do(\underline{X}_k = \tilde{X}_k)$ denotes the replacement of the input $\underline{X}_k$ with the perturbed variable $\tilde{X}_k$. The corresponding prediction is denoted as $\hat{Y}|{do(\underline{X}_k = \tilde{X}_k)}$.\\
Some methods rely on marginalized prediction functions. To be more specific, the prediction function $f$ that is marginalized over $\underline{X}_S$ is defined as
$$ f_S(\underline{x}_{S}) = \mathbb{E}_{\underline{X}_{D \backslash S}|\underline{X}_S}[f(\underline{X}_{D \backslash S}) | \underline{X}_S = \underline{x}_{S}],$$
where the conditional expectation is taken over the feature distribution. If features are sampled independently this term reduces to the marginal expectation.\\
We indicate whether the prediction of the original model $f$ or the marginalized variant $f_S$ is meant by making use of the respective subscripts: $f$ or $f_S$. For example, $\hat{Y}_{f_S}$ refers to the marginalized prediction.\\
There will be frequent references to variables as "containing information" or to "information entering a variable". Such phrases are not meant in the information theoretic sense – despite links with concepts such as mutual information – but rather figuratively to enable a more pleasant reading experience.

\begin{figure}[H]
\hfill
\begin{tabular}{l|l} 
symbol & meaning\\
\midrule
$D$ & set of all feature indices $\{1, \dots, d\}$ \\
$X_S$, $\underline{X}_S$ & variables $S$, features $S$ \\
$Y$, $\hat{Y}$ & target, prediction \\
$f$, $f_S$ & model, marginalized model\\
&$f_S(\underline{x}_S) := \mathbb{E}_{\underline{X}_{D} |\underline{X}_S}[f(\underline{X})|\underline{X}_S=\underline{x}_S]$ \\
$\dpd, \idp$ & dependence, independence \\
$\tilde{X}_{K}^{J}$ & sample from $P(X_K|X_J)$\\
$\hat{Y}|{do(\underline{X}_K = \tilde{X}_{K}^{J})}$ & $\hat{Y}$ after intervention\\
$\mathcal{L}$ & Loss function\\
\end{tabular}
\hfill
 \begin{tikzpicture}[baseline={(current bounding box.center)}, thick, scale=1.3, every node/.style={scale=1, line width=0.25mm, black, fill=white}]
		\node[draw, circle, scale=0.7] (x1) at (0, 1) {$X_1$};
		\node[draw, circle, scale=0.7] (xj) at (-.5, 0) {$X_i$};
		\node[draw, circle, scale=0.7] (xd) at (0,-1) {$X_d$};
		\node[draw, circle, scale=0.7] (y) at (-1.25,0) {$Y$};
		\node[scale=0.7] (dots) at (-.1,.4) {$\vdots$};
		\node[scale=0.7] (dots) at (-.1,-.3) {$\vdots$};
		\draw[dashed,gray] (-1.5,-1.25) -- (.25,-1.25) -- (.25,1.25) -- (-1.5,1.25) -- cycle;
		\node[scale=0.7] (dots) at (-.625,-1.5) {data level, variables};
		\draw[-] (xj) -- (x1);
		\draw[-] (xd) -- (x1);
		\draw[-] (xd) -- (xj);
		\draw[-] (x1) -- (y);
		\draw[-] (xj) -- (y);
		\draw[-] (xd) -- (y);
		
		\node[draw, circle, scale=0.7] (ux1) at (.75, 1) {$\underline{X}_1$};
		\node[draw, circle, scale=0.7] (uxj) at (.75, 0) {$\underline{X}_i$};
		\node[draw, circle, scale=0.7] (uxd) at (.75,-1) {$\underline{X}_d$};
		\node[scale=0.7] (dots) at (.75,.6) {$\vdots$};
		\node[scale=0.7] (dots) at (.75,-.5) {$\vdots$};
		\draw[dashed,gray] (0.5,-1.25) -- (2.25,-1.25) -- (2.25,1.25) -- (0.5,1.25) -- cycle;
		\node[scale=0.7] (dots) at (1.4,-1.5) {model level, features};
		\draw[->] (x1) -- (ux1);
		\draw[->] (xj) -- (uxj);
		\draw[->] (xd) -- (uxd);
		
		\node[draw, circle, scale=0.7] (yhat) at (2, 0) {$\hat{Y}$};
		\draw[->] (ux1) -- (yhat);
		\draw[->] (uxj) -- (yhat);
		\draw[->] (uxd) -- (yhat);
\end{tikzpicture}

\hfill
\caption{Notation overview. Left: Mathematical symbols and their meaning. Right: DAG explaining the differentiation between data-level variables and model-level features.} 
\label{fig:notation}
\end{figure}
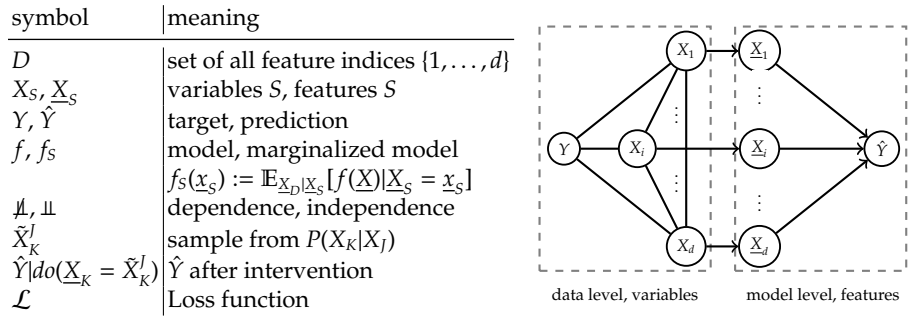

\section{Background: Direct and Associative Importance}
\label{sec:disentangling-direct-and-indirect}
Over the course of this Section, we formally introduce general definitions of direct and associative importance that generalize methods such as Permutation Feature Importance \cite{Breiman2001rf} and SAGE value functions \cite{covert_understanding_2020}. These two classes of relevance, namely direct and associative importance, are introduced in Sections \ref{subsec:dr} and \ref{subsec:ar}. As the proposed decomposition is based on these general definitions, it is applicable to a wide-range of feature importance methods.\\
Both definitions can be modified along two axes: Firstly, recent proposals \cite{Datta2017,lundberg_unified_2017,covert_understanding_2020} formulate feature relevance quantification as a cooperative game, which evaluates the relevance of features/variables with respect to different perturbation baselines.\footnote{Feature relevance attribution can be seen as a cooperative game, where the contribution of one player (feature/variable) depends on which players are already in the room \cite{lundberg_unified_2017,covert_understanding_2020}. For example, if two features contribute to the prediction performance via an interaction term, their joint contribution is attributed to the feature that is perturbed first. If the relevance of features is only considered in isolation, both features are attributed with the interaction contribution. Consequently, recent methods do not only evaluate the relevance in isolation but also over multiple different baselines \cite{covert_understanding_2020}.} In order to accommodate such considerations within our framework, the general definitions allow the specification of flexible perturbation baselines (more details below). Furthermore, the general measures accommodate methods that rely on the original prediction function $f$, such as Permutation Feature Importance (PFI) \cite{Breiman2001rf} as well as methods that marginalize the prediction function over perturbed features, as suggested by \cite{covert_understanding_2020}.\\
\begin{figure}[H]
\centering
\subfigure[DR baseline]{
\begin{tikzpicture}[thick, scale=.65, every node/.style={scale=.8, line width=0.25mm, black, fill=white}]

		\node[draw, circle, scale=0.7] (x1) at (0, 1) {$X_1$};
		\node[draw, circle, scale=0.7] (xj) at (-.5, 0) {$X_i$};
		\node[draw, circle, scale=0.7] (xd) at (0,-1) {$X_d$};
		\node[draw, circle, scale=0.7] (y) at (-1.25,0) {$Y$};
		\draw[dashed,gray] (-1.6,-1.35) -- (.35,-1.35) -- (.35,1.35) -- (-1.6,1.35) -- cycle;
		\node[scale=0.7] (dots) at (-.625,-1.6) {data};
		\draw[-] (xj) -- (x1);
		\draw[-] (xd) -- (x1);
		\draw[-] (xd) -- (xj);
		\draw[-] (x1) -- (y);
		\draw[-] (xj) -- (y);
		\draw[-] (xd) -- (y);

		\node[draw, circle, scale=0.7] (ux1) at (1, 1) {$\underline{X}_1$};
		\node[draw=orange, circle, scale=0.7] (uxj) at (1, 0) {$\underline{X}_i$};
		\node[draw=orange, circle, scale=0.7] (uxd) at (1,-1) {$\underline{X}_d$};
		\draw[dashed,gray] (0.65,-1.35) -- (2.35,-1.35) -- (2.35,1.35) -- (0.65,1.35) -- cycle;
		\node[scale=0.7] (dots) at (1.5,-1.6) {model};
		\draw[->] (x1) -- (ux1);
		
		\node[draw, circle, scale=0.7] (yhat) at (2, 0) {$\hat{Y}$};
		\draw[->] (ux1) -- (yhat);
		\draw[->] (uxj) -- (yhat);
		\draw[->] (uxd) -- (yhat);
		
	
\end{tikzpicture}
}
\subfigure[DR restored]{
\begin{tikzpicture}[thick, scale=.65, every node/.style={scale=.8, line width=0.25mm, black, fill=white}]

		\node[draw, circle, scale=0.7] (x1) at (0, 1) {$X_1$};
		\node[draw, circle, scale=0.7] (xj) at (-.5, 0) {$X_i$};
		\node[draw, circle, scale=0.7] (xd) at (0,-1) {$X_d$};
		\node[draw, circle, scale=0.7] (y) at (-1.25,0) {$Y$};
		\draw[dashed,gray] (-1.6,-1.35) -- (.35,-1.35) -- (.35,1.35) -- (-1.6,1.35) -- cycle;
		\node[scale=0.7] (dots) at (-.625,-1.6) {data};
		\draw[-,blue] (xj) -- (x1);
		\draw[-] (xd) -- (x1);
		\draw[-,blue] (xd) -- (xj);
		\draw[-] (x1) -- (y);
		\draw[-] (xj) -- (y);
		\draw[-] (xd) -- (y);

		\node[draw, circle, scale=0.7] (ux1) at (1, 1) {$\underline{X}_1$};
		\node[draw=red, circle, scale=0.7] (uxj) at (1, 0) {$\underline{X}_i$};
		\node[draw=orange, circle, scale=0.7] (uxd) at (1,-1) {$\underline{X}_d$};
		\draw[dashed,gray] (0.65,-1.35) -- (2.35,-1.35) -- (2.35,1.35) -- (0.65,1.35) -- cycle;
		\node[scale=0.7] (dots) at (1.5,-1.6) {model};
		\draw[->] (x1) -- (ux1);
		\draw[->,red] (xj) -- (uxj);
		
		\node[draw, circle, scale=0.7] (yhat) at (2, 0) {$\hat{Y}$};
		\draw[->] (ux1) -- (yhat);
		\draw[->,green] (uxj) -- (yhat);
		\draw[->] (uxd) -- (yhat);
		
	
\end{tikzpicture}
}
\subfigure[AR baseline]{
\begin{tikzpicture}[thick, scale=.65, every node/.style={scale=.8, line width=0.25mm, black, fill=white}]

		\node[draw, circle, scale=0.7] (x1) at (0, 1) {$X_1$};
		\node[draw, circle, scale=0.7] (xj) at (-.5, 0) {$X_i$};
		\node[draw, circle, scale=0.7] (xd) at (0,-1) {$X_d$};
		\node[draw, circle, scale=0.7] (y) at (-1.25,0) {$Y$};
		\draw[dashed,gray] (-1.6,-1.35) -- (.35,-1.35) -- (.35,1.35) -- (-1.6,1.35) -- cycle;
		\node[scale=0.7] (dots) at (-.625,-1.6) {data};
		\draw[-] (xj) -- (x1);
		\draw[-] (xd) -- (x1);
		\draw[-] (xd) -- (xj);
		\draw[-] (x1) -- (y);
		\draw[-] (xj) -- (y);
		\draw[-] (xd) -- (y);

		\node[draw, circle, scale=0.7] (ux1) at (1, 1) {$\underline{X}_1$};
		\node[draw=orange, circle, scale=0.7] (uxj) at (1, 0) {$\underline{X}_i$};
		\node[draw=orange, circle, scale=0.7] (uxd) at (1,-1) {$\underline{X}_d$};
		\draw[dashed,gray] (0.65,-1.35) -- (2.35,-1.35) -- (2.35,1.35) -- (0.65,1.35) -- cycle;
		\node[scale=0.7] (dots) at (1.5,-1.6) {model};
		\draw[->] (x1) -- (ux1);
		\draw[->,orange] (x1) -- (uxj);
		\draw[->,orange] (x1) -- (uxd);
		
		\node[draw, circle, scale=0.7] (yhat) at (2, 0) {$\hat{Y}$};
		\draw[->] (ux1) -- (yhat);
		\draw[->] (uxj) -- (yhat);
		\draw[->] (uxd) -- (yhat);
		
\end{tikzpicture}
}
\subfigure[AR restored]{
\begin{tikzpicture}[thick, scale=.65, every node/.style={scale=.8, line width=0.25mm, black, fill=white}]

		\node[draw, circle, scale=0.7] (x1) at (0, 1) {$X_1$};
		\node[draw, circle, scale=0.7] (xj) at (-.5, 0) {$X_i$};
		\node[draw, circle, scale=0.7] (xd) at (0,-1) {$X_d$};
		\node[draw, circle, scale=0.7] (y) at (-1.25,0) {$Y$};
		\draw[dashed,gray] (-1.6,-1.35) -- (.35,-1.35) -- (.35,1.35) -- (-1.6,1.35) -- cycle;
		\node[scale=0.7] (dots) at (-.625,-1.6) {data};
		\draw[-] (xj) -- (x1);
		\draw[-] (xd) -- (x1);
		\draw[-] (xd) -- (xj);
		\draw[-] (x1) -- (y);
		\draw[-] (xj) -- (y);
		\draw[-] (xd) -- (y);

		\node[draw, circle, scale=0.7] (ux1) at (1, 1) {$\underline{X}_1$};
		\node[draw=red, circle, scale=0.7] (uxj) at (1, 0) {$\underline{X}_i$};
		\node[draw=red, circle, scale=0.7] (uxd) at (1,-1) {$\underline{X}_d$};
		\draw[dashed,gray] (0.65,-1.35) -- (2.35,-1.35) -- (2.35,1.35) -- (0.65,1.35) -- cycle;
		\node[scale=0.7] (dots) at (1.5,-1.6) {model};
		\draw[->,red] (xj) -- (uxd);
		\draw[->,red] (xj) -- (uxj);
		\draw[->] (x1) -- (ux1);
		\draw[->,orange] (x1) -- (uxj);
		\draw[->,orange] (x1) -- (uxd);
		
		\node[draw, circle, scale=0.7] (yhat) at (2, 0) {$\hat{Y}$};
		\draw[->,black] (ux1) -- (yhat);
		\draw[->,green] (uxj) -- (yhat);
		\draw[->,blue] (uxd) -- (yhat);
		
\end{tikzpicture}
}
\caption{The DAGs (a) - (d) illustrate the difference between the Direct and Associative Importance of a variable of interest $K=J=\{i\}$ and baseline sets $B = C = \{1\}$. (a) While the baseline $X_B$ is restored ($\underline{X}_B = X_B$), all remaining model level features $\underline{X}$ are perturbed (orange), breaking all links to the prediction relevant information in the data level variables $X$. (b) For DI, only the feature of interest is restored (red) ($\underline{X}_K = X_K$), all other features are unaffected. Therefore, only the relevance via the direct path (green) is quantified. As $X_i$ is dependent with $X_1$ and $X_d$ (blue), the feature $\underline{X}_i$ now also contains information about the dependent covariates. (c) For AI, in the baseline setting the features are reconstructed from context variables $X_C$ (meaning that $\underline{X}_C$ is fully restored). (d) In addition to the information from $X_C$, the information from $X_J$ is restored all features $\underline{X}$ (affected features in red). As total associative importance intervenes on all features that are dependent with $X_J$ given $X_C$, influence enters not only via direct (green), but also via indirect pathways (blue).}
\label{fig:dag-intuition-dr-ar} 
\end{figure}
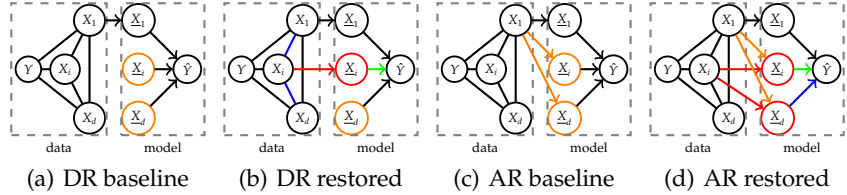
\subsection{Direct Importance}
\label{subsec:dr}

Direct importance measures assess the relevance of specific features for the model's performance. They are based on a simple idea: If a feature is not important, removing all its prediction-relevant information should have no or only little effect on the model's performance.
\\
As such, direct importance measures compare the model performance before and after perturbing the features of interest. The perturbed variable $\tilde{X}_K^\emptyset$ preserves the marginal distribution of the original variable, but is sampled independently from $X$ and $Y$ (as indicated by the superscript $\emptyset$). Hence, the perturbed variable only contains noise and no prediction-relevant information.\\
Our notation starts off with the full perturbation of all features. After that, we quantify the improvement in performance when restoring a set of features. That is, the direct importance $DI(\underline{X}_K)$ quantifies the improvement in performance when features $\underline{X}_K$ are restored in isolation. The direct importance $DI(\underline{X}_K|\underline{X}_B)$ over baseline $B$ quantifies the improvement in performance, when $\underline{X}_K$ is restored in a context where features $\underline{X}_B$ have been restored already. Furthermore, we indicate whether the original model $f$ or the marginalized prediction functions $f_{B}, f_{B \cup K}$ are used by $DI(\underline{X}_K|\underline{X}_B; f, f)$ and $DI(\underline{X}_K|\underline{X}_B; f_{B}, f_{B \cup K})$. For example, Permutation Feature Importance is a special case of a direct importance measure, i.e. $DI(\underline{X}_K|\underline{X}_{D \backslash K}; f, f)$. Direct importance is formally introduced in Definition \ref{def:dr} and illustrated in Figure \ref{fig:dag-intuition-dr-ar} (a) - (b).\\
\begin{definition}[Direct Importance (DI)]
Given two disjoint feature index sets $B$ and $K$, direct importance of features $\underline{X}_K$ over baseline features $\underline{X}_B$ is defined as:
\begin{align*}
    DI(\underline{X}_K|\underline{X}_B;g,g') := &\mathbb{E}[\loss(\hat{Y}_{g}|{do(\underline{X}_{D \backslash B }=\tilde{X}_{D \backslash B}^\emptyset)};Y)]\\ 
    - &\mathbb{E}[\loss(\hat{Y}_{g'}|{do(\underline{X}_{D \backslash (B \cup K)} = \tilde{X}_{D \backslash (B \cup K)}^\emptyset)};Y)]
\end{align*}
where $(g, g') \in \{(f, f),(f_{B}, f_{(B \cup K)})\}$. In addition, the perturbed joint $\tilde{X}^{\emptyset}$ preserves the covariate joint distribution, but is independent of the original variables and the prediction target. I.e.,  $P(\tilde{X}^{\emptyset}) = P(X) \text{ and }\tilde{X}^{\emptyset} \idp (X, Y)$.
\label{def:dr}
\end{definition}
As all features except the features of interest are left unchanged, direct importance can only enter the model via the features of interest themselves. Therefore, directly important features must be causal for the prediction (Proposition \ref{proposition:tdr-direct-sensitivity}).\\
\begin{proposition}[Direct Sensitivity of DI]
Nonzero Direct Importance $DI(\underline{X}_K|\underline{X}_B)$ implies that feature $\underline{X}_K$ is causal for the model (over $P(X_B)P(X_K)P(X_{D \backslash (B \cup K)})$).
\label{proposition:tdr-direct-sensitivity}
\end{proposition}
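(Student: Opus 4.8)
The plan is to prove the contrapositive: assuming the restored prediction function does not use its $\underline{X}_K$-input anywhere on the support of the product measure $P(X_B)P(X_K)P(X_{D\setminus(B\cup K)})$, I will show that $DI(\underline{X}_K\mid\underline{X}_B;g,g')=0$ for both admissible pairs $(g,g')$. To organize the argument I would abbreviate the two expected-loss terms of Definition \ref{def:dr} by the value function $v_g(R):=\mathbb{E}[\loss(\hat{Y}_{g}|{do(\underline{X}_{D\setminus R}=\tilde{X}_{D\setminus R}^\emptyset)};Y)]$, so that $DI(\underline{X}_K\mid\underline{X}_B;g,g')=v_g(B)-v_{g'}(B\cup K)$; the task then reduces to showing $v_g(B)=v_{g'}(B\cup K)$ under non-causality.

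First I would treat the case $(g,g')=(f,f)$, which carries the main intuition quoted before the statement. Both terms feed the original model $f$, and the only difference between the interventions defining $v_f(B)$ and $v_f(B\cup K)$ is the $K$-block: in $v_f(B)$ it is the noise $\tilde{X}_K^\emptyset$, in $v_f(B\cup K)$ it is the restored $X_K$, while $\underline{X}_B=X_B$ and the $D\setminus(B\cup K)$-block is noise in both. I would make this precise by coupling the two terms on a shared draw of $(X_B,Y,\tilde{X}_{D\setminus(B\cup K)}^\emptyset)$ — legitimate because the $D\setminus(B\cup K)$ marginal of $\tilde{X}_{D\setminus B}^\emptyset$ is exactly $P(X_{D\setminus(B\cup K)})$ — leaving only the $K$-slot value ($\tilde{X}_K^\emptyset$ versus $X_K$) to differ. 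Non-causality of $f$ in its $K$-argument makes the two integrands equal pointwise, so $v_f(B)=v_f(B\cup K)$ and $DI=0$. Reading this contrapositively yields the claim, and since the inputs entering $v_f(B)$ are distributed exactly as $P(X_B)P(X_K)P(X_{D\setminus(B\cup K)})$, the configurations witnessing a nonzero difference lie in the support named in the statement.

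The main obstacle is the marginalized pair $(g,g')=(f_B,f_{B\cup K})$. Because the marginalized functions ignore the perturbed coordinates, the two terms reduce to $v_{f_B}(B)=\mathbb{E}[\loss(f_B(X_B);Y)]$ and $v_{f_{B\cup K}}(B\cup K)=\mathbb{E}[\loss(f_{B\cup K}(X_B,X_K);Y)]$, and now the difference $f_{B\cup K}(x_B,x_K)-f_B(x_B)$ need not vanish even when $f$ does not use $\underline{X}_K$: conditioning on $\underline{X}_K=x_K$ can still shift the conditional law of the marginalized-out features $\underline{X}_{D\setminus(B\cup K)}$ that $f$ \emph{does} use, i.e.\ an indirect pathway survives. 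This is precisely where the qualifier ``over $P(X_B)P(X_K)P(X_{D\setminus(B\cup K)})$'' does the work: under this product measure one has $\underline{X}_K\idp\underline{X}_{D\setminus(B\cup K)}$, so conditioning on $\underline{X}_K$ leaves the distribution of the marginalized-out block unchanged, and a model $f$ that does not use its $K$-input then satisfies $f_{B\cup K}(x_B,x_K)=f_B(x_B)$. Substituting this identity (equivalently, invoking the tower property, since a $K$-free conditional mean collapses onto $f_B$) gives $v_{f_B}(B)=v_{f_{B\cup K}}(B\cup K)$ and hence $DI=0$.

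I would conclude by collecting the two cases and asserting the contrapositive, while flagging the standard technical caveats: the equalities hold almost surely rather than everywhere, ``non-causal'' is read as invariance on the product-measure support, and the loss is assumed integrable so the expectations are well defined. The genuinely delicate point throughout is the marginalized case — verifying that the product-measure qualifier is exactly what suppresses the indirect (associative) pathway, so that residual dependence of $f_{B\cup K}$ on $\underline{X}_K$ coincides with genuine direct use of $\underline{X}_K$ by $f$, which is the very distinction DEDACT is built to track.
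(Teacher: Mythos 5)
Your proof is correct and follows essentially the same route as the paper's: a contrapositive argument showing that invariance of $f$ in its $K$-argument over the product-measure support forces the predictions before and after restoring $\underline{X}_K$ to coincide for both admissible pairs $(g,g')$, with the key identity $f_{B\cup K}(x_B,x_K)=f_B(x_B)$ appearing in the paper as well, which likewise resolves the marginalized case by taking the marginalization over the intervened (hence independent) distribution of the remaining features. One minor imprecision to note: the inputs to the baseline term are distributed as $P(X_B)\,P(X_K,X_{D\setminus(B\cup K)})$ rather than the full product, since $\tilde{X}^\emptyset$ preserves the covariate joint; this does not harm your argument, because the support of that law is contained in the product-measure support, which is all your coupling needs and is exactly the domination step the paper spells out explicitly.
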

However, as illustrated by the motivational example in the introduction to this paper, direct importance fails to expose leakage of information from variables that are not directly used by the model.
\subsection{Associative Importance}
\label{subsec:ar}
In contrast to direct importance, associative importance measures are not concerned with the mechanisms that the model uses. They rather aim at measuring the component of the predictive performance that can be explained with information contained in the variables of interest $X_J$.\\
Therefore, associative importance measures compare the model's performance under fully perturbed features with the performance that results from reconstruction of the features by the use of information from the variables of interest $X_J$. The reconstructed features preserve the conditional distribution of their corresponding variables with the variable of interest, i.e. $P(\underline{X}|X_J) = P(X|X_J)$. Consequently, the features corresponding to the variables of interest $\underline{X}_J$ are fully reconstructed, features that correspond to dependent variables are partly reconstructed, and features that correspond to variables that are independent from $X_J$ remain unchanged.\\
As for direct importance, it is of relevance whether the variable is introduced in isolation, or whether it is added to a set of variables for which the information has been restored already. We denote the associative importance of variables $X_J$ that are introduced in isolation as $AI(X_J)$, and the associative importance of variables $X_J$ that are introduced to already conditioned upon variables $X_C$ as $AI(X_J|X_C)$. Associative importance is formally introduced in Definition \ref{def:ar} and illustrated in Figure \ref{fig:dag-intuition-dr-ar} (c) - (d).
\begin{definition}[Associative Importance (AI)]
We define the associative importance of a variable set $J$ given context variables $C$ as:
\begin{align*}
 AI(X_J|X_C; g, g') := &\mathbb{E}[\loss(Y; \hat{Y}_g|{do(\underline{X}_{D\backslash C} = \tilde{X}_{D\backslash C}^{C})})]\\
- &\mathbb{E}[\loss(Y;\hat{Y}_{g'}|{do(\underline{X}_{D\backslash (C \cup J)} = \tilde{X}_{D \backslash (C \cup J)}^{C \cup J})})]
\end{align*}
where $(g,g') \in \{(f,f), (f_{C},f_{C \cup J})\}$.
The conditionally perturbed joint $\tilde{X}^{S}$ is required to preserve the covariate joint distribution as well as the conditional distribution with features $X_S$, i.e. $P(\tilde{X}^{S}| X_S) = P(X|X_S)$. Also, it is conditionally independent of the remaining covariates and prediction target  ($\tilde{X}^{S} \idp (X_{D \backslash S}, Y) | X_S$).
\label{def:ar}
\end{definition}
Associative importance provides insight into the dependence structure in the underlying dataset. Nonzero associative importance indicates that $X_J$ contains prediction-relevant information that is not included in the context variables $X_C$.
\begin{proposition} [Associative Sensitivity of AI]
Nonzero associative importance $AI(X_J|X_C;f,f)$ based on the non-marginalized function $f$ implies that the target $Y$ is conditionally dependent with $X_J$ given $X_C$, i.e. $Y \dpd X_J | X_C$. If the marginalized functions $f_C$ and $f_{C \cup J}$ are used, nonzero associative importance $AI(X_J|X_C;f_C, f_{C \cup J})$ implies conditional dependence of $Y$ with $X_J$ given $X_C$, i.e. $Y \dpd X_J | X_C$ if (1) cross entropy or mean squared error is used as loss and (2) $f$ is the respective loss-optimal predictor.
\label{proposition:tar-associative-sensitivity}
\end{proposition}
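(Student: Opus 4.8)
The plan is to prove both implications by contraposition: I assume conditional independence $Y \idp X_J \mid X_C$ and show that the corresponding associative importance vanishes. For the non-marginalized case this will hold for \emph{arbitrary} $f$ and arbitrary loss, whereas for the marginalized case the two stated assumptions are exactly what is needed to make the argument go through, so I treat the two parts separately.

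For part 1 ($g=g'=f$), I first observe that in each term the restored features carry their original values while the remaining features are drawn from the conditional perturbation. By the defining property $\tilde{X}^{S} \idp (X_{D \setminus S}, Y)\mid X_S$ together with $\underline{X}_S = X_S$, the model input $\underline{X}$ is conditionally independent of $Y$ given the restored set; hence $\hat{Y} \idp Y \mid X_C$ in the first term and $\hat{Y} \idp Y \mid X_{C\cup J}$ in the second, while $\underline{X} \sim P(X)$ in both. The key step is then to show that the joint law of $(\hat{Y},Y)$ is identical across the two terms. Conditioning on $X_C$, I write each joint law as a mixture $\mathbb{E}_{X_C}[\,P(\hat{Y}\mid\cdot)\,P(Y\mid\cdot)\,]$; using $Y \idp X_J \mid X_C$ to replace $P(Y\mid X_{C\cup J})$ by $P(Y\mid X_C)$, and the chain rule $P(X_{D\setminus C}\mid X_C) = P(X_J\mid X_C)\,P(X_{D\setminus(C\cup J)}\mid X_{C\cup J})$ to marginalize $X_J$ out of the second-term prediction law, the two mixtures collapse to the same distribution. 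Since the loss depends on $(\hat{Y},Y)$ only through their joint law, the two expected losses coincide and $AI=0$.

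For part 2 ($g=f_C$, $g'=f_{C\cup J}$), the first simplification is that a marginalized function $f_S$ depends only on $\underline{X}_S$, so the perturbations of the non-restored features are irrelevant and the two terms reduce to $\mathbb{E}[\loss(Y; f_C(X_C))]$ and $\mathbb{E}[\loss(Y; f_{C\cup J}(X_{C\cup J}))]$. Here the two assumptions enter: for MSE the loss-optimal $f$ equals $\mathbb{E}[Y\mid\underline{X}]$ and for cross-entropy it equals $P(Y\mid\underline{X})$, and in either case the tower property gives $f_S(\cdot)=\mathbb{E}[Y\mid X_S=\cdot]$ (resp.\ $P(Y\mid X_S=\cdot)$) — that is, the marginalized optimal model is again the Bayes-optimal predictor on $X_S$. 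Conditional independence $Y \idp X_J \mid X_C$ then forces $\mathbb{E}[Y\mid X_{C\cup J}]=\mathbb{E}[Y\mid X_C]$ (resp.\ equality of the conditional class probabilities), hence $f_{C\cup J}(X_{C\cup J})=f_C(X_C)$ almost surely, and the two expected losses are literally equal.

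I expect the main obstacle to be justifying that both assumptions in part 2 are \emph{genuinely required}, rather than any technical overhead. For a generic (non-optimal) $f$ one only has $f_C=\mathbb{E}[f(\underline{X})\mid X_C]=\mathbb{E}[f_{C\cup J}\mid X_C]$, so conditioning on $X_C$ and using $Y \idp X_J \mid X_C$ reduces the comparison to $\mathbb{E}[\loss(Y; \mathbb{E}_{X_J}[g])]$ versus $\mathbb{E}_{X_J}[\loss(Y; g)]$ with $g=f_{C\cup J}(X_{C\cup J})$; by Jensen's inequality these differ whenever the loss is strictly convex in the prediction and $g$ genuinely varies in $X_J$. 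Thus conditional independence alone does not kill $AI$ in the marginalized case. The role of loss-optimality together with the MSE/cross-entropy choice is precisely to collapse $f_{C\cup J}$ to a function of $X_C$ only, eliminating the Jensen gap; making this collapse rigorous — and identifying it as the mechanism by which the two assumptions act — is the crux of the argument.
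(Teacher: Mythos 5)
Your proof is correct, and it splits into two comparisons with the paper. For the non-marginalized case your argument is essentially the paper's: both proceed by contraposition, and your mixture-over-$X_C$ computation (factor the joint law of the model input and $Y$ given $X_C$, use $Y \idp X_J \mid X_C$ to replace $P(Y\mid X_{C\cup J})$ by $P(Y\mid X_C)$, then apply the chain rule to recover $P(X_{D\setminus C}\mid X_C)$ as the input law in both terms) is the same chain of identities the paper writes as $P(Y,\tilde{X}^C, X_C) = \dots = P(Y, X_C, \tilde{X}^{C\cup J})$, from which equality of the laws of $(\hat{Y}, Y)$ and hence of the two risks follows. For the marginalized case, however, you take a genuinely different route. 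The paper rewrites $AI(X_J\mid X_C; f_C, f_{C\cup J})$ as a difference of SAGE value functions $v_{f_{C\cup J}}(C\cup J) - v_{f_C}(C)$ and imports results from Covert et al.: for mean squared error the difference equals $\mathbb{E}[\mathrm{Var}(Y\mid X_C)] - \mathbb{E}[\mathrm{Var}(Y\mid X_{C\cup J})]$, for cross entropy it equals the conditional mutual information $I(Y;X_J\mid X_C)$, and conditional independence annihilates both. You instead argue directly: the marginalization of the loss-optimal predictor is again the Bayes-optimal predictor on the restored set, i.e. $f^*_S = \mathbb{E}[Y\mid X_S]$ (resp. $P(Y\mid X_S)$) by the tower property — valid here precisely because the paper defines marginalization with the \emph{conditional} feature distribution — and conditional independence then forces $f^*_{C\cup J}(X_{C\cup J}) = f^*_C(X_C)$ almost surely, so the two risks are literally equal. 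Your route is more elementary, self-contained, and treats both losses uniformly; the paper's route buys the connection to established SAGE theory and, for cross entropy, the stronger two-sided statement that the marginalized $AI$ vanishes if and only if $Y \idp X_J \mid X_C$. Your closing Jensen-gap observation — that for a non-optimal model one only has $f_C = \mathbb{E}[f_{C\cup J}\mid X_C]$, so a strictly convex loss makes the marginalized $AI$ nonzero (indeed negative) even under conditional independence — is a correct addition the paper does not contain: it shows the loss-optimality hypothesis is genuinely necessary rather than an artifact of the proof.
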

Yet, for the computation of associative importance not only the distribution of the features $\underline{X}_J$ that correspond to the variables of interest $X_J$ but also all the features corresponding to dependent variables of $X_J$ is affected. Associative importance therefore enters the model directly and/or indirectly. As a result, it fails to provide insight into which features the model uses for its prediction. An illustrative example for the lacking causal interpretation of associative importance is given in Section \ref{sec:introduction}.\\
With the right choice of parameters, the definitions of associative importance and direct importance yield well-established IML methods. An overview is given in Table \ref{table:influence-family-members}. In order to estimate the measures based on real data, the expected loss (risk) can be replaced by the empirical risk.\\
\begin{table}[H]
\centering
\begin{tabular}{l|l|l|l}
\textbf{type} & \textbf{$(g,g')$} & \textbf{sets} & \textbf{method}\\
\midrule
\textbf{DI}  & $(f, f)$ & $K = \{k\}$ $B = D \backslash K$, & Permutation Feature Importance of $k$ \cite{Breiman2001rf}\\

 & $(f_{\emptyset}, f_{S})$ & $K = S$, $B = \emptyset$ & marginal SAGE $v(S)$ \cite{covert_understanding_2020} \\

\midrule
\textbf{AI} & $(f_{C}, f_{C \cup J})$ & $J = S$, $C = \emptyset$ & conditional SAGE $v(S)$ \cite{covert_understanding_2020}\\

 & $(f, f)$ & $J=\{j\}$, $C = D \backslash J$ & Conditional Feature Importance of $j$ \cite{Strobl2008}\\
\end{tabular}
\caption{Parameter choices for direct importance (DI) and associative importance (AI) that yield well-established IML methods.}
\label{table:influence-family-members}
\end{table}
\section{Components of Associative and Direct Importance}
\label{sec:iai-via-and-di-from}
In Section \ref{sec:disentangling-direct-and-indirect} classes of direct and associative importance measures have been formally introduced. We have shown that both measures have strengths and weaknesses: While direct importance measures allow the exposure of those features that causally influence the prediction performance, they fail to uncover leakage of information from variables that the model does not directly use for its prediction. In contrast to that, associative importance measures can uncover leakage of information from variables of interest, irrespective of whether the corresponding feature is causal for the model's prediction and performance. However, as opposed to direct importance, associative importance measures do not incline a causal interpretation: If the associative importance for a variable is nonzero, we cannot infer that the corresponding feature is causal for the prediction. Even in combination, the methods fail to provide insight into both the sources of prediction-relevant information and the feature pathways, which allow the information to enter the model.\\
In order to mitigate this inherent trade-off, we propose two measures. Firstly, \textit{DI from} that quantifies the component of the direct importance of a feature of interest $\underline{X}_K$ that can be explained with information from a set of variables $X_J$. Using \textit{DI from}, we get insight into the sources of information that enter the model via a feature $\underline{X}_K$. Secondly, \textit{AI via} that quantifies the component of the associative importance of a variable $X_J$ that enters the model via a specific set of features $\underline{X}_K$. Using \textit{AI via}, we learn which feature pathways allow the information of variables $X_J$ to enter the model.\\
\begin{figure}[H]
\centering
\subfigure["DI from" before]{
\begin{tikzpicture}[thick, scale=.65, every node/.style={scale=.8, line width=0.25mm, black, fill=white}]
		\node[draw, circle, scale=0.7] (x1) at (0, 1) {$X_1$};
		\node[draw, circle, scale=0.7] (xj) at (-.5, 0) {$X_i$};
		\node[draw, circle, scale=0.7] (xd) at (0,-1) {$X_d$};
		\node[draw, circle, scale=0.7] (y) at (-1.25,0) {$Y$};
		\draw[dashed,gray] (-1.6,-1.35) -- (.35,-1.35) -- (.35,1.35) -- (-1.6,1.35) -- cycle;
		\node[scale=0.7] (dots) at (-.625,-1.6) {data};
		\draw[-] (xj) -- (x1);
		\draw[-] (xd) -- (x1);
		\draw[-] (xd) -- (xj);
		\draw[-] (x1) -- (y);
		\draw[-] (xj) -- (y);
		\draw[-] (xd) -- (y);

		\node[draw, circle, scale=0.7] (ux1) at (1, 1) {$\underline{X}_1$};
		\node[draw=orange, circle, scale=0.7] (uxj) at (1, 0) {$\underline{X}_i$};
		\node[draw=orange, circle, scale=0.7] (uxd) at (1,-1) {$\underline{X}_d$};
		\draw[dashed,gray] (0.65,-1.35) -- (2.35,-1.35) -- (2.35,1.35) -- (0.65,1.35) -- cycle;
		\node[scale=0.7] (dots) at (1.5,-1.6) {model};
		\draw[->] (x1) -- (ux1);

		\node[draw, circle, scale=0.7] (yhat) at (2, 0) {$\hat{Y}$};
		\draw[->] (ux1) -- (yhat);
		\draw[->] (uxj) -- (yhat);
		\draw[->] (uxd) -- (yhat);
	
\end{tikzpicture}
}
\subfigure["DI from" after]{
\begin{tikzpicture}[thick, scale=.65, every node/.style={scale=.8, line width=0.25mm, black, fill=white}]
		\node[draw, circle, scale=0.7] (x1) at (0, 1) {$X_1$};
		\node[draw, circle, scale=0.7] (xj) at (-.5, 0) {$X_i$};
		\node[draw, circle, scale=0.7] (xd) at (0,-1) {$X_d$};
		\node[draw, circle, scale=0.7] (y) at (-1.25,0) {$Y$};
		\draw[dashed,gray] (-1.6,-1.35) -- (.35,-1.35) -- (.35,1.35) -- (-1.6,1.35) -- cycle;
		\node[scale=0.7] (dots) at (-.625,-1.6) {data};
		\draw[-] (xj) -- (x1);
		\draw[-] (xd) -- (x1);
		\draw[-] (xd) -- (xj);
		\draw[-] (x1) -- (y);
		\draw[-] (xj) -- (y);
		\draw[-] (xd) -- (y);
		
		\node[draw, circle, scale=0.7] (ux1) at (1, 1) {$\underline{X}_1$};
		\node[draw=red, circle, scale=0.7] (uxj) at (1, 0) {$\underline{X}_i$};
		\node[draw=orange, circle, scale=0.7] (uxd) at (1,-1) {$\underline{X}_d$};
		\draw[dashed,gray] (0.65,-1.35) -- (2.35,-1.35) -- (2.35,1.35) -- (0.65,1.35) -- cycle;
		\node[scale=0.7] (dots) at (1.5,-1.6) {model};
		\draw[->] (x1) -- (ux1);
		\draw[->,red] (xd) -- (uxj);

		\node[draw, circle, scale=0.7] (yhat) at (2, 0) {$\hat{Y}$};
		\draw[->] (ux1) -- (yhat);
		\draw[->,blue] (uxj) -- (yhat);
		\draw[->] (uxd) -- (yhat);
\end{tikzpicture}
}
\subfigure["AI via" before]{
\begin{tikzpicture}[thick, scale=.65, every node/.style={scale=.8, line width=0.25mm, black, fill=white}]
		\node[draw, circle, scale=0.7] (x1) at (0, 1) {$X_1$};
		\node[draw, circle, scale=0.7] (xj) at (-.5, 0) {$X_i$};
		\node[draw, circle, scale=0.7] (xd) at (0,-1) {$X_d$};
		\node[draw, circle, scale=0.7] (y) at (-1.25,0) {$Y$};
		\draw[dashed,gray] (-1.6,-1.35) -- (.35,-1.35) -- (.35,1.35) -- (-1.6,1.35) -- cycle;
		\node[scale=0.7] (dots) at (-.625,-1.6) {data};
		\draw[-] (xj) -- (x1);
		\draw[-] (xd) -- (x1);
		\draw[-] (xd) -- (xj);
		\draw[-] (x1) -- (y);
		\draw[-] (xj) -- (y);
		\draw[-] (xd) -- (y);
		
		\node[draw=orange, circle, scale=0.7] (ux1) at (1, 1) {$\underline{X}_1$};
		\node[draw=orange, circle, scale=0.7] (uxj) at (1, 0) {$\underline{X}_i$};
		\node[draw=orange, circle, scale=0.7] (uxd) at (1,-1) {$\underline{X}_d$};
		\draw[dashed,gray] (0.65,-1.35) -- (2.35,-1.35) -- (2.35,1.35) -- (0.65,1.35) -- cycle;
		\node[scale=0.7] (dots) at (1.5,-1.6) {model};
		\draw[->,orange] (x1) -- (ux1);
		\draw[->,orange] (x1) -- (uxj);
		\draw[->,orange] (x1) -- (uxd);
		
		\node[draw, circle, scale=0.7] (yhat) at (2, 0) {$\hat{Y}$};
		\draw[->] (ux1) -- (yhat);
		\draw[->] (uxj) -- (yhat);
		\draw[->] (uxd) -- (yhat);
\end{tikzpicture}
}
\subfigure["AI via" after]{
\begin{tikzpicture}[thick, scale=.65, every node/.style={scale=.8, line width=0.25mm, black, fill=white}]

		\node[draw, circle, scale=0.7] (x1) at (0, 1) {$X_1$};
		\node[draw, circle, scale=0.7] (xj) at (-.5, 0) {$X_i$};
		\node[draw, circle, scale=0.7] (xd) at (0,-1) {$X_d$};
		\node[draw, circle, scale=0.7] (y) at (-1.25,0) {$Y$};
		\draw[dashed,gray] (-1.6,-1.35) -- (.35,-1.35) -- (.35,1.35) -- (-1.6,1.35) -- cycle;
		\node[scale=0.7] (dots) at (-.625,-1.6) {data};
		\draw[-] (xj) -- (x1);
		\draw[-] (xd) -- (x1);
		\draw[-] (xd) -- (xj);
		\draw[-] (x1) -- (y);
		\draw[-] (xj) -- (y);
		\draw[-] (xd) -- (y);

		\node[draw=orange, circle, scale=0.7] (ux1) at (1, 1) {$\underline{X}_1$};
		\node[draw=orange, circle, scale=0.7] (uxj) at (1, 0) {$\underline{X}_i$};
		\node[draw=red, circle, scale=0.7] (uxd) at (1,-1) {$\underline{X}_d$};
		\draw[dashed,gray] (0.65,-1.35) -- (2.35,-1.35) -- (2.35,1.35) -- (0.65,1.35) -- cycle;
		\node[scale=0.7] (dots) at (1.5,-1.6) {model};

		\draw[->,red] (xj) -- (uxd);
		\draw[->,orange] (x1) -- (ux1);
		\draw[->,orange] (x1) -- (uxj);
		\draw[->,orange] (x1) -- (uxd);
		
		\node[draw, circle, scale=0.7] (yhat) at (2, 0) {$\hat{Y}$};
		\draw[->,black] (ux1) -- (yhat);
		\draw[->,black] (uxj) -- (yhat);
		\draw[->,blue] (uxd) -- (yhat);
	
\end{tikzpicture}
}
\caption{The DAGs (a) - (d) illustrate $AI(X_i|X_1 \rightarrow \underline{X}_d)$ and $DI(\underline{X}_i|\underline{X}_1 \leftarrow X_d)$. (a-b) For $DI(\underline{X}_i|\underline{X}_1 \leftarrow X_d)$ we only partially reconstruct feature $\underline{X}_i$ using variable $X_d$ (c-d) In order to compute $AI(X_i|X_1 \rightarrow \underline{X}_d)$, we only reconstruct the information from the variable $X_i$ in the feature of interest $\underline{X}_d$.}
\label{fig:dag-intuition-tai-via-di-from} 
\end{figure}
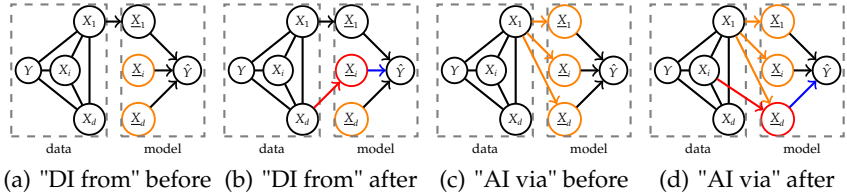
\subsection{Components of Direct Importance}
When computing the direct importance of a feature, we compare the prediction when $\underline{X}_K$ is fully perturbed with the prediction when $\underline{X}_K$ is fully reconstructed. Now, we would like to understand whether $X_J$ contributes to the direct importance of $\underline{X}_K$. Therefore, instead of fully reconstructing $\underline{X}_K$ we only partially reconstruct the feature using information from $X_J$. As such, we quantify the degree to which the direct importance of $\underline{X}_K$ can be explained with $X_J$. An illustration of this procedure is given in Figure \ref{fig:dag-intuition-tai-via-di-from} (a)-(b).\\
\begin{definition}[DI from]
For the direct importance of features $K$ given baseline $B$, we define the associative component that can be attributed to variables $J$ as 
\begin{align*}
	DI(\underline{X}_K|\underline{X}_B \leftarrow X_J;g, g')
	:= &\mathbb{E}[\loss(Y; \hat{Y}_g|{do(\underline{X}_{D \backslash B} = \tilde{X}_{D \backslash B}^{\emptyset})})] \\
	- &\mathbb{E}[\loss(Y; \hat{Y}_{g'}|{do(\underline{X}_{D \backslash B} = \tilde{X}_{D \backslash B}^{\emptyset},  \underline{X}_{K} = \tilde{X}_{K}^{J})})]
\end{align*}
where $(g, g') \in \{(f, f),(f_{B}, f_{(B \cup K)})\}$. In addition, $\tilde{X}^\emptyset$ and $\tilde{X}^J$ satisfy the following requirements: $P(\tilde{X}^\emptyset) = P(X)$ and $\tilde{X}^\emptyset \idp X$ as well as $P(\tilde{X}^J|X_J) = P(X|X_J)$ and $\tilde{X}^J \idp X | X_J$.
\label{def:tdi-from}
\end{definition}
Intuitively, if the variables $X_J$ do not contain information, the respective features $\underline{X}_K$ cannot be reconstructed at all. Consequently, the direct importance $DI(\underline{X}_K|\underline{X}_B)$ cannot be attributed to $X_J$. In contrast, if $X_J$ were to be perfectly correlated with $\underline{X}_K$, the feature is fully reconstructed and the full relevance can be attributed to $X_J$.\\
In scenarios where all features except for the features of interest are reconstructed in the baseline $B = D \backslash K$, \textit{DI from} is complementary to Relative Feature Importance \cite{konig_relative_2021}, which quantifies the component of Permutation Feature Importance that cannot be attributed to a user-defined set of variables $X_G$.\\
    
\subsection{Components of Associative Importance}

When computing the associative importance $AI(X_J|X_C)$, we compare the prediction when \textit{all} features are reconstructed using $X_C$ with the prediction when \textit{all} features are reconstructed using $X_C$ and $X_J$. In order to gain insight into the direct feature contributions to the associative importance, we want to quantify how much specific feature pathways $\underline{X}_K$ contribute to the overall score. Therefore, instead of reconstructing the information from $X_J$ in all features, we only update the features of interest $\underline{X}_K$. In other words, we allow information from $X_J$ over $X_C$ to enter the model via $\underline{X}_K$ while blocking all other feature paths. An illustration is provided in Figure \ref{fig:dag-intuition-tai-via-di-from} (c)-(d).\\
\begin{definition}[AI via]
The Associative Importance of variable set $X_J$ given context variables $X_C$ via features $\underline{X}_K$ is defined as
\begin{align*}
	AI(X_J|X_C \rightarrow \underline{X}_K;g,g') 
	= &\mathbb{E}[\loss(Y; \hat{Y}_{g}|{do(\underline{X}_D = \tilde{X}_D^{C})})] \\
	- &\mathbb{E}[\loss(Y; \hat{Y}_{g'}|{do(\underline{X}_{K} = \tilde{X}_{K}^{C \cup J}, \underline{X}_{D \backslash K} = \tilde{X}_{D \backslash K}^{C})})]\\
	= &AI(X_J|X_C;g,g') | {do(\underline{X}_{D \backslash K} = \tilde{X}_{D \backslash K}^{C})}
\end{align*}
for $(g,g') \in \{(f,f), (f_{C},f_{C \cup J})\}$. Furthermore, $\tilde{X}^C$ and $\tilde{X}^{C \cup J}$ satisfy the following requirements: $P(\tilde{X}^C | X_C) = P(X | X_C)$ and  $\tilde{X}^C \idp X | X_C$ as well as $P(\tilde{X}^{C \cup J} | X_{C \cup J}) = P(X | X_{C \cup J})$ and $\tilde{X}^{C \cup J} \idp X | X_{C \cup J}$.
\label{def:tai-via}
\end{definition}
Intuitively, \textit{AI via} can be thought of as the direct importance of a feature of interest, except that instead of fully perturbing the features only surplus information w.r.t to $X_C$ is removed, and that features are only reconstructed with respect to $X_C$ and $X_J$ instead of fully reconstructing them.\\
\\
It should be noted, that both measures are based on model evaluations on mixed joint distributions of the form $\tilde{X}_{K}^{C \cup J}, \tilde{X}_{D \backslash K}^{C}$. As the variables are being defined independently, the components are only linked via the conditioning variable $C$. Consequently, if $X_K$ and $X_{D \backslash K}$ are dependent conditional on $X_C$, the mixed joint distribution does not preserve the original covariate joint. Such extrapolation may cause misleading interpretations \cite{hooker_please_2019,molnar2020pitfalls}. We conjecture that coupling the variables, e.g. with conditional-rank preserving independent representation learning, can avoid unnecessary extrapolation \cite{lum_statistical_2016,johndrow_algorithm_2019,Zhao2020Conditional}. However, this investigation goes beyond the scope of the article.

\section{Decomposing Associative and Direct Importance with DEDACT}
\label{sec:decomposition}

In the previous Section, we have defined two measures: \textit{DI from} that quantifies the component of a direct importance that can be attributed to a set of variables $X_J$. And \textit{AI via} that quantifies the component of the associative importance that enters the model via a set of features $\underline{X}_K$.\\
We can leverage these measures in various ways to yield a decomposition of associative and direct importance. The choice of decomposition depends on the prerequisites we place on the method: For scenarios where computational efficiency is prioritized, we introduce a fast decomposition in Section \ref{subsec:fast-decomposition}. For situations where computational resources are available and a fair attribution is prioritized, we propose an additive Shapley value based decomposition in Section \ref{subsec:shapley-decomposition}.
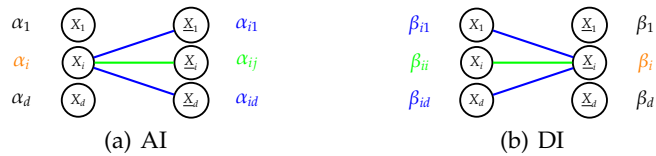
\begin{figure}[H]
\centering
    \hfill
\subfigure[AI]{
    \begin{tikzpicture}[thick, scale=.5, every node/.style={scale=.85, line width=0.25mm, black, fill=white}]

		\node[draw, circle, scale=0.7] (x1) at (-1.5, 1) {$X_1$};
		\node[draw, circle, scale=0.7] (xj) at (-1.5, 0) {$X_i$};
		\node[draw, circle, scale=0.7] (xd) at (-1.5,-1) {$X_d$};
		
		\node[scale=1] (dots) at (-3,1) {$\alpha_1$};
		\node[scale=1, text=orange] (dots) at (-3,0) {$\alpha_i$};
		\node[scale=1] (dots) at (-3,-1) {$\alpha_d$};
		
		\node[draw, circle, scale=0.7] (ux1) at (1.5, 1) {$\underline{X}_1$};
		\node[draw, circle, scale=0.7] (uxj) at (1.5, 0) {$\underline{X}_i$};
		\node[draw, circle, scale=0.7] (uxd) at (1.5,-1) {$\underline{X}_d$};
		
		\node[scale=1, text=blue] (dots) at (3,1) {$\alpha_{i1}$};
		\node[scale=1, text=green] (dots) at (3,0) {$\alpha_{ij}$};
		\node[scale=1, text=blue] (dots) at (3,-1) {$\alpha_{id}$};		

		\draw[-, blue] (xj) -- (ux1);
		\draw[-, blue] (xj) -- (uxd);
		\draw[-, green] (xj) -- (uxj);
		
    \end{tikzpicture}
}
\hfill
\subfigure[DI]{
    \begin{tikzpicture}[thick, scale=.5, every node/.style={scale=.85, line width=0.25mm, black, fill=white}]

		\node[draw, circle, scale=0.7] (x1) at (-1.5, 1) {$X_1$};
		\node[draw, circle, scale=0.7] (xj) at (-1.5, 0) {$X_i$};
		\node[draw, circle, scale=0.7] (xd) at (-1.5,-1) {$X_d$};
		
		\node[scale=1, text=blue] (dots) at (-3,1) {$\beta_{i1}$};
		\node[scale=1, text=green] (dots) at (-3,0) {$\beta_{ii}$};
		\node[scale=1, text=blue] (dots) at (-3,-1) {$\beta_{id}$};
		
		\node[draw, circle, scale=0.7] (ux1) at (1.5, 1) {$\underline{X}_1$};
		\node[draw, circle, scale=0.7] (uxj) at (1.5, 0) {$\underline{X}_i$};
		\node[draw, circle, scale=0.7] (uxd) at (1.5,-1) {$\underline{X}_d$};
		
		\node[scale=1] (dots) at (3,1) {$\beta_{1}$};
		\node[scale=1, text=orange] (dots) at (3,0) {$\beta_{i}$};
		\node[scale=1] (dots) at (3,-1) {$\beta_{d}$};

		\draw[-, blue] (x1) -- (uxj);
		\draw[-, green] (xj) -- (uxj);
		\draw[-, blue] (xd) -- (uxj);
		
    \end{tikzpicture}
}
\hfill
    \caption{Notation for decompositions of direct and associative importance. (a) The associative importance of a variable (denoted as $\alpha_j$) is decomposed into its feature contributions (denoted as $\alpha_{jk}$). (b) The direct importance of a feature (denoted as $\beta_k$) is decomposed into its variable contributions (denoted as $\beta_{kj}$). The feature/variable of interest is highlighted in orange, the respective direct and indirect components are highlighted in green and blue.}
    \label{fig:my_label}
\end{figure}
\subsection{DEDACT: Fast Decomposition}
\label{subsec:fast-decomposition}

For the purpose of computational efficiency, we restrict the fast decomposition to require the evaluation of only one further \textit{AI via}/\textit{DI from} configuration per evaluation of $AI$/$DI$ and component. Furthermore, we prioritize sensitivity, referred to the exposure of sources of information and feature pathways, over additivity. One direct and one associative importance measure will be considered as examples.\\
\\
\textbf{Permutation Feature Importance:} In our notation, Permutation Feature Importance \cite{Breiman2001rf} is written as:
$$\beta_k := PFI_k = DI(\underline{X}_k | \underline{X}_{D \backslash k}; f, f)$$
Using a single evaluation of \textit{DI from}, we can quantify the component of the feature importance that can be explained by a specific variable $X_j$.
$$\beta_{kj} := DI(\underline{X}_k | \underline{X}_{D \backslash k} \leftarrow X_j; f, f)$$
The component $\beta_{kj}$ represents the total contribution that the variable $X_j$ is able to provide in isolation. Nevertheless, the contribution that $X_j$ can only deliver in cooperation with a further feature $X_l$ is not quantified. Further, if variables $X_j$ and $X_l$ are dependent, then both are fully attributed with the shared contribution. Consequently, the attributions are not guaranteed to add up to the overall importance $\beta_k$.\\
In order to achieve an \textit{additive} decomposition with just one evaluation per feature and component, we would need to define an order of the variables $\pi$. This estimate would then be
$$\beta_{kj}^{\pi} = DI(\underline{X}_K | \underline{X}_{D \backslash k} \leftarrow X_{S \cup j}) - DI(\underline{X}_K | \underline{X}_{D \backslash k} \leftarrow X_S),$$
where $S$ is the set of features that appears earlier in the order $\pi$. Although such an order based decomposition is additive, it has a major disadvantage: The attribution of relevance to the variables is strictly prioritized. Variables that appear earlier in the order are fully attributed with the contribution that they share with variables that appear later in the order. As such, features that contain prediction-relevant information that enters via $\underline{X}_k$ but that share their full contribution with variables that appear earlier in the order, are deemed irrelevant. Additionally, the contribution that two variables jointly achieve is fully attributed to the variable that appears later in the order.
Consequently, since sensitivity is prioritized, $\beta_{kj}$ is to be preferred over $\beta_{kj}^{\pi}$.\\
\\
\textbf{Conditional SAGE:} SAGE \cite{covert_understanding_2020} is based on a linear combination of value functions $v$. In order to compute the surplus contribution of variable $j$ given the set of features $C$, the difference in value with and without the respective variable is computed. In our notation, these differences can be written as
$$\alpha_j^C = v(C \cup j) - v(C) = AI(X_j|X_S; f_C, f_{C \cup i}).$$
Therefore, we can leverage \textit{AI via} to compute the component of the associative importance that enters the model via feature $\underline{X}_k$. If we were to introduce the feature in isolation, interactions between $\underline{X}_k$ and other features cannot be restored as they are left perturbed. As we prioritize sensitivity, we instead compare the total importance with the importance, when only the feature of interest is removed. The contribution of the interaction is then attributed to every partaking feature.
$$\alpha_{jk}^C = \alpha_j^C - AI(X_j|X_C \rightarrow \underline{X}_{D \backslash k}; f_C, f_{C \cup j})$$
The SAGE attribution $\phi_j$ for feature $X_j$ is a linear combination of value function evaluations $\alpha_j^C$. We yield the component of the SAGE value that can be attributed to feature $\underline{X}_k$ by replacing the difference terms $\alpha_j^C$ with the respective $\alpha_{jk}^C$ values that only account for the contribution of feature $\underline{X}_k$.\\
\\
\textbf{Computational complexity:} If the decomposition is to be performed for all $d$ features/variables, the decomposition scales in $\mathcal{O}(d^2)$. It is possible to decompose subsets of the features/variables into the components corresponding to a subset of the features/variables. I.e., in order to decompose the relevance of one feature/variable with respect to $m$ components, the number of additional \textit{DI from}/\textit{AI via} evaluations scales in $\mathcal{O}(m)$. 
\subsection{DEDACT: Shapley Based Decomposition}
\label{subsec:shapley-decomposition}
In order to achieve an additive decomposition of associative and direct importance, the game theoretic Shapley values \cite{shapley1953value} can be adduced. As some recent work elaborates \cite{Datta2017,lundberg_unified_2017,covert_understanding_2020}, attribution of importance can be seen as a cooperative game in which players (features/variables) collaborate to yield a certain payoff (importance). As the surplus contribution of a player depends on which players are already in the room, the Shapley value averages the surplus contribution of the feature with respect to all possible configurations. Shapley values yield the unique attribution that satisfies a number of fairness axioms \cite{shapley1953value} (Appendix \ref{appendix:shapley}).\\
In order to apply Shapley values to decompose direct and associative importance measures, we need to define the corresponding value functions $w$. Then, the attribution $\phi_i$ is defined as
$$ \phi_i(w) = \frac{1}{d} \sum_{S \subseteq D \backslash \{i\}} \binom{d-1}{|S|}^{-1} [w(S \cup \{i\}) - w(S)].$$
\textbf{Permutation Feature Importance:} In order to formulate the decomposition of Permutation Feature Importance as a cooperative game, we leverage \textit{DI from} to quantify the payoff for a team of players $X_J$.\\
$$ w_k(J) := DI(\underline{X}_k|\underline{X}_{D \backslash k} \leftarrow X_J; f, f)$$
\textbf{Conditional SAGE:} The decomposition of the SAGE value function $v$ itself can be formulated as a cooperative game. The corresponding value function $w$ is defined as 
$$ w_j^C(K) := AI(X_j|X_C \rightarrow \underline{X}_K; f_C, f_{C \cup j})$$
\textbf{Computational complexity:} As the number of possible subsets grows exponentially in the number of features, the computation of Shapley values is expensive. Like previous work in the field, we suggest to approximate Shapley values by randomly sampling and evaluating orders until the estimates converge \cite{lundberg_unified_2017,covert_understanding_2020}.

\section{Simulations}
\label{sec:application}
We illustrate the usefulness of the approach on two simulated examples. Both provide access to the ground-truth causal graphs, which allows us to validate the interpretation. Furthermore, we chose all relationships to be linear with Gaussian noise, because in this setting reliable conditional distribution estimation is readily available.\footnote{The code for our experiments is available in an anonoymized repository \href{https://github.com/anonomyzed-submission/dedact-code}{[click here]}.}
%
\begin{example}[Biomarker Failure]
We consider a \textit{prostate cancer} ($Y$) diagnosis setting. For the model training and evaluation, we don't have access to the true $Y$, but only to labels $L$ that are wrongfully influenced by \textit{PSA} $P$.\footnote{The \textit{prostate specific antigen (PSA)} was wrongfully used in clinical practice for several decades \cite{ioannidis_biomarker_2013}, leading to over-treatment and its retraction from clinical usage.} In order to avoid bias from \textit{PSA}, the model is fit with access to two variables only: \textit{biomarker} $B$ and \textit{cycling} $C$. However, the model learns to use \textit{cycling} as a proxy for \textit{PSA},\footnote{Studies suggest that cycling increases PSA levels \cite{jiandani2015effect}.} such that the bias leaks into the model. A fast DEDACT decomposition of the associative influence \textit{PSA} exposes that the variable leaks into the model via the dependent \textit{cycling} feature (Figure \ref{fig:psa-results} (a)). Further, a fast DEDACT decomposition of the PFI of the feature \textit{cycling} shows that it's importance can be fully attributed to \textit{PSA}, suggesting the removal of the feature from the model.
\label{example:psa-di-vs-ii}
\end{example}
\begin{figure}[]
    \centering
    \input{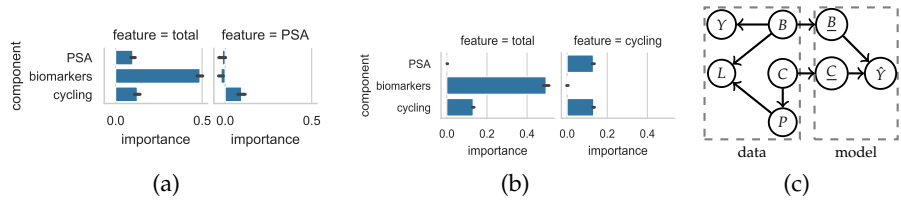}
\caption{(a) Associative importance $AI(X_j; f, f)$ and the respective fast decomposition for $PSA$. (b) Direct importance (PFI) and the respective fast decomposition for \textit{cycling}. (c) The DAG of the ground-truth Structural Causal Model. \textit{PSA} ($P$) is causal for the historical labels, although only the \textit{biomarker} ($B$) is relevant for the actual condition. The model uses \textit{cycling} ($C$) as a proxy, thereby enabling the bias to leak into the model.}
\label{fig:psa-results} 
\end{figure}
%
%
\begin{example}[Sensitive Attributes]
In this hypothetical scenario, adapted from the census income dataset from the UCI Repository \cite{Dua:2019}, the aim is the to predict the income of subjects. The protected attributes \textit{age}, \textit{sex} and \textit{race} affect the income of subjects directly and indirectly. \textit{Age} only affects income via \textit{capital gain}, \textit{number of educations} and \textit{hours per week}. \textit{Race} causes income directly as well as indirectly via \textit{marriage status} and \textit{occupation}. And \textit{sex} influences the target directly as well as indirectly via \textit{relationship} and \textit{work class}. The corresponding causal graph is depicted in Figure \ref{fig:census-income}.\footnote{We emphasize that this scenario is hypothetical and do not want to suggest that the postulated causal structure reflects the real world.}\\
The Shapley based DEDACT decomposition of SAGE values for features \textit{race}, \textit{sex} and \textit{age} correctly identifies the features, via which the information from the respective variable enters the model (Figure \ref{fig:ii-adult-sage}). The Shapley based DEDACT decomposition of PFI correctly identifies the respective sources of prediction-relevant information (Figure \ref{fig:ii-adult-viafrom}).
\end{example}
\begin{figure}[H]
    \centering
    \includegraphics[width=\linewidth]{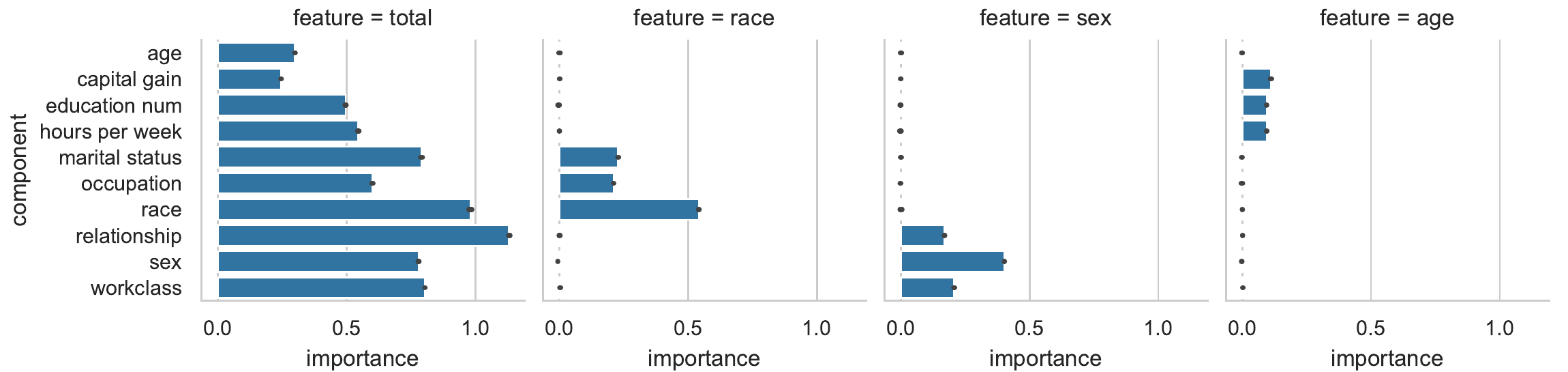}
    \caption{Approximation of the Shapley based decomposition of SAGE values ($60$ SAGE orders, $25$ decomposition orders). The procedure correctly identifies the feature pathways which allow the associative importance to enter the model.}
    \label{fig:ii-adult-sage}
\end{figure}
\begin{figure}[H]
    \centering
    \includegraphics[width=\linewidth]{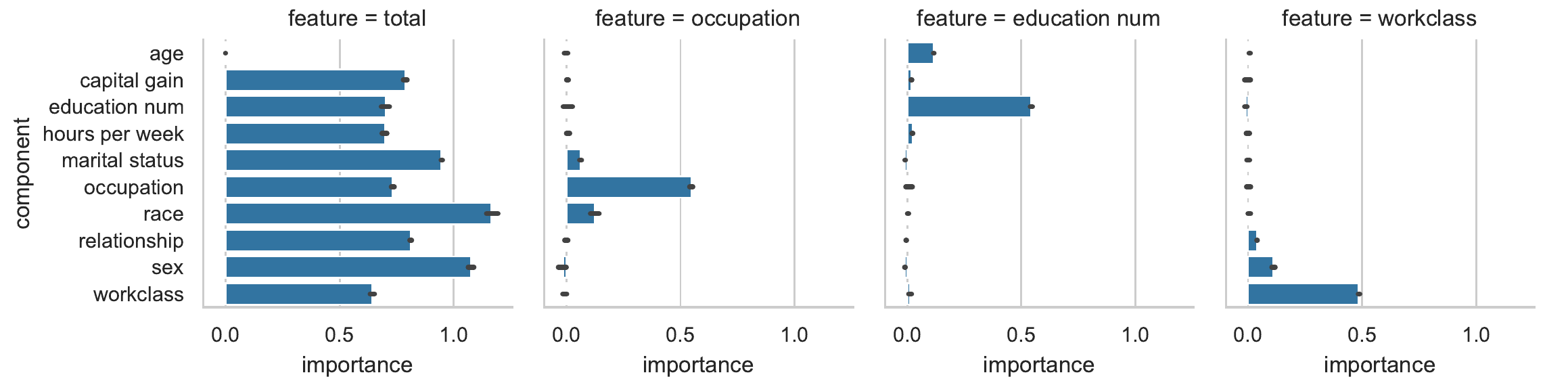}
    \caption{Shapley decomposition of PFI ($50$ orders). The sources of prediction relevant information for the features \textit{education num}, \textit{workclass} and \textit{occupation} are correctly identified.}
    \label{fig:ii-adult-viafrom}
\end{figure}
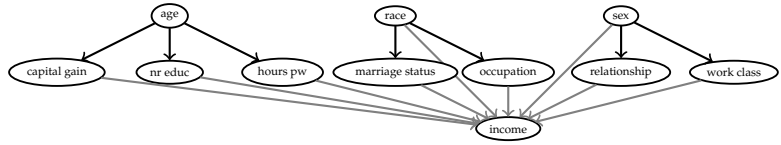
\begin{figure}[]
    \centering
    \begin{tikzpicture}[thick, scale=1.5, every node/.style={scale=.7, line width=0.25mm, black, fill=white}]
        \usetikzlibrary{shapes}
		\node[draw, ellipse, scale=0.7] (sex) at (2, .5) {sex};
		\node[draw, ellipse, scale=0.7] (age) at (-2, .5) {age};
		\node[draw, ellipse, scale=0.7] (race) at (0,.5) {race};
		\node[draw, ellipse, scale=0.7] (inc) at (1,-.5) {income};
	
	    \draw[->,gray] (race) -- (inc);
        \draw[->,gray] (sex) -- (inc);
	
		\node[draw, ellipse, scale=0.7] (hpw) at (-1,0) {hours pw};
		\node[draw, ellipse, scale=0.7] (ne) at (-2,0) {nr educ};
		\node[draw, ellipse, scale=0.7] (wc) at (3,0) {work class};
		\node[draw, ellipse, scale=0.7] (oc) at (1,0) {occupation};
		\node[draw, ellipse, scale=0.7] (cg) at (-3,0) {capital gain};
		\node[draw, ellipse, scale=0.7] (ms) at (0,0) {marriage status};
		\node[draw, ellipse, scale=0.7] (rel) at (2,0) {relationship};
		
		\draw[->] (sex) -- (rel);
		\draw[->] (sex) -- (wc);
		\draw[->] (age) -- (hpw);
		\draw[->] (age) -- (cg);
		\draw[->] (age) -- (ne);
		\draw[->] (race) -- (ms);
		\draw[->] (race) -- (oc);

        \draw[->, gray] (hpw) -- (inc);
        \draw[->, gray] (ne) -- (inc);
        \draw[->, gray] (wc) -- (inc);
        \draw[->, gray] (oc) -- (inc);
        \draw[->, gray] (cg) -- (inc);
        \draw[->, gray] (ms) -- (inc);
        \draw[->, gray] (rel) -- (inc);
        
    \end{tikzpicture}
    \caption{Ground truth DAG for the simulated adult dataset. Gray edges indicate parent edges for predicted income.}
    \label{fig:census-income}
\end{figure}
\section{Discussion}

We propose DEDACT, a general framework that allows to decompose associative importance measures into their direct feature pathway contributions as well as direct importance measures into the contributions of their information sources. Thereby, DEDACT enables novel insight into model and data. DEDACT does not only uncover leakage of information but also explains via which feature pathways the influence enters the model.\\
A fast but non-additive, as well as an additive and fair, but computationally demanding decomposition have been proposed. The approach relies on conditional samples that established methods like conditional SAGE \cite{covert_understanding_2020} require to compute anyway. Thus, the computational overhead can be reduced.\\
Further research is required evaluating how well the method scales to more complex distribution and model types. Moreover, the question of how mixed perturbations with different conditioning sets can be coupled to reduce extrapolation is a promising avenue for future research. We conjecture that methods on information preserving independent representation learning could be leveraged. In addition, future work will assess how knowledge about the dependence structure in the graph can be used to reduce the computational complexity of the method. We see DEDACT as a step towards understanding machine learning methods within the context they operate in: the underlying data generating mechanism.\\

 \bibliographystyle{splncs04}
 \bibliography{root.bib}

\appendix
\newpage

\section{Proofs}
\label{appendix:proofs}

\begin{proof}[Proof of Direct Sensitivity of DI, Proposition \ref{proposition:tdr-direct-sensitivity}]
In order to simplify notation, we define $R := D \backslash (B \cup K)$.\\
We prove the statement in two steps: First we show that if $\underline{X}_K$ is not causal for $\hat{Y}$ (over the distribution $P(X_B)P(X_{K})P(X_R)$) then $\hat{Y}$ is invariant with respect to any value that may be assigned to $\underline{X}_K$ for the evaluation of direct importance (and therefore invariant to reconstruction/destruction). Secondly, we show that given the invariance the measures evaluates to zero.\\
If the invariance holds over the distribution $P(X_B)P(X_K)P(X_R)$, then it also holds for the distributions over which the model is evaluated. Therefore we show that $P(X_B, \tilde{X}_{K}^\emptyset, \tilde{X}_{R}^\emptyset) > 0$ and $P(X_B, X_K, \tilde{X}_{R}^\emptyset) > 0$ both imply that $P(X_B)P(X_K)P(X_R) > 0$. We factorize the the distributions over which the model is evaluated before and after restoring $X_K$.
\begin{align*}
    P(X_B, \tilde{X}_{K}^\emptyset, \tilde{X}_R^\emptyset) &= P(X_B)P(\tilde{X}_{K}^\emptyset, \tilde{X}_R^\emptyset | X_B)\\
    &= P(X_B)P(\tilde{X}_{K}^\emptyset, \tilde{X}_R^\emptyset)\\
    &= P(X_B)P(X_K, X_R)
\end{align*}
\begin{align*}
    P(X_B, X_K, \tilde{X}_{R}) &= P(X_B)P(X_K, \tilde{X}_R^\emptyset | X_B)\\
    &= P(X_B)P(X_K|X_B)P(X_R)
\end{align*}
Using e.g. the law of total probability, we can see that nonzero probability of a data point on at least one of the evaluation distributions implies nonzero probability over $P(X_B)P(X_K)P(X_R)$.\\
Consequently, reconstructing $\underline{X}_K$ does not have an effect on the prediction of the non-marginalized function $f$.\\
Similarly, due to the invariance, for any $(x_B, x_K)$ with $P(X_B=x_B, X_K=x_K) > 0$ it holds that
\begin{align*}
    f_{B \cup K}(x_B, x_K) &= \mathbb{E}[f(\underline{X})^{do(\underline{X}_R^\emptyset)}|\underline{X}_B=x_B, \underline{X}_K=x_K]\\
    &= \mathbb{E}[f(\underline{X})^{do(\underline{X}_{K \cup R}=\tilde{X}_{K \cup R}^\emptyset)}|\underline{X}_B=x_B]\\
    &= f_B(x_B)
\end{align*}
Consequently, for $g$,$g'$ being either the original or marginalized prediction functions, 
\begin{align*}
    \hat{Y}_{g}|{do(\underline{X}_{K \cup R}=\tilde{X}_{K \cup R}^\emptyset)}
    &= \hat{Y}_{g'}|{do(\underline{X}_{R} = \tilde{X}_{R}^\emptyset)}.
\end{align*}
As follows, 
\begin{align*}
    DI(\underline{X}_K|\underline{X}_B;g,g') := &\mathbb{E}[\loss(\hat{Y}_{g}|{do(\underline{X}_{K \cup R }=\tilde{X}_{K \cup R}^\emptyset)};Y)]\\ 
    - &\mathbb{E}[\loss(\hat{Y}_{g'}|{do(\underline{X}_{R} = \tilde{X}_{R}^\emptyset)};Y)]\\
    = &0.
\end{align*}
Contraposition proves the statement.
\end{proof}
\begin{proof}[Proof of Associative Sensitivity of AI, Proposition \ref{proposition:tar-associative-sensitivity}]\\
\textit{Non-marginalized functions (Case 1):}
In order to prove the first statement, we show that under conditional independence the interventional distributions with respect to $C$ and $C \cup J$ coincide. As a consequence, associative importance with respect to the non-marginalized functions evaluates to zero. More specifically, if the conditional independence $Y \idp X_J | X_C$ holds, then 
\begin{align*}
    P(Y, \tilde{X}^C, X_C) &= P(Y|\tilde{X}^C, X_C) P(\tilde{X}^C, X_C)\\ 
    &= P(Y|\tilde{X}^C, X_C, X_J) P(\tilde{X}^C, X_C)\\
    &= P(Y|\tilde{X}^{C \cup J}, X_C, X_J) P(\tilde{X}^{C\cup J}, X_C)\\
    &= P(Y, X_C, \tilde{X}^{C \cup J})
\end{align*}
As follows, 
$$P(\tilde{X}^C, Y) = P(\tilde{X}^{C \cup J}, Y)$$
and therefore 
$$P(Y,\hat{Y}_f|{do(\underline{X}_{D\backslash C} = \tilde{X}_{D\backslash C}^{C})}) = P(Y,\hat{Y}_{f}|{do(\underline{X}_{D\backslash (C \cup J)} = \tilde{X}_{D \backslash (C \cup J)}^{C \cup J})}).$$
and consequently the measure $AI(X_J|X_C;f,f)$ evaluates to zero. The respective contraposition proves the statement.\\
\textit{Marginalized functions (Case 2):}
If the marginalized functions $f_C, f_{C \cup J}$ are used, further assumptions are required. To be more precise, we prove that given mean squared error or cross entropy loss and the corresponding loss optimal predictor, conditional independence implies zero associative importance. Contraposition proves the statement.\\
For our proofs we rely on results for SAGE value functions, detailed in \cite{covert_understanding_2020}, Appendix C. Therefore, we rewrite associative importance given marginalized functions in terms of SAGE value functions:%
\begin{align*}
AI(X_J|X_C;f_C, f_{C \cup J}) = &\mathbb{E}[\loss(Y; \hat{Y}_{f_C}|{do(\underline{X}_{D\backslash C} = \tilde{X}_{D\backslash C}^{C})})]\\
- &\mathbb{E}[\loss(Y;\hat{Y}_{f_{C \cup J}}|{do(\underline{X}_{D\backslash (C \cup J)} = \tilde{X}_{D \backslash (C \cup J)}^{C \cup J})})]\\
= &v_{f_{C \cup J}}(C \cup J) - v_{f_C}(C) 
\end{align*}
\textit{Mean Squared Error:} Given mean squared error as loss and the corresponding loss optimal predictor $f^*$, it holds that:
$$v_{f_{C \cup J}^*}(C \cup J) - v_{f_C^*}(C) = \mathbb{E}[\text{Var}(Y|X_C)] - \mathbb{E}[\text{Var}(Y|X_{C \cup J})]$$
Under conditional independence $Y \idp X_J | X_C$ it follows that
\begin{align*}
\mathbb{E}[\text{Var}(Y|X_{C \cup J})] &= \mathbb{E}[\mathbb{E}[\text{Var}(Y|X_{C \cup J})|X_C]]\\
&= \mathbb{E}[\text{Var}(Y|X_C)]
\end{align*}
and consequently $Y \idp X_J | X_C \Rightarrow AI(X_J|X_C;f^*_C, f^*_{C \cup J}) = 0$.\\
\textit{Cross Entropy:} Given cross entropy as loss and the corresponding loss optimal predictor $f^*$, it holds that:
$$v_{f_{C \cup J}^*}(C \cup J) - v_{f_C^*}(C) = I(Y;X_J|X_C)$$
Mutual information $I(Y;X_J|X_C)$ is zero if and only if $Y \idp X_J | X_C$. Consequently $AI(X_J|X_C;f^*_C, f^*_{C \cup J}) = 0 \Leftrightarrow X_J \idp Y | X_C$.\\
\end{proof}



\section{Shapley Value Axioms}
\label{appendix:shapley}

Shapley values are the unique solution that satisfies a set of fairness axioms \cite{shapley1953value}. Given value function $v$, the attributions $\phi$ satisfy the following requirements:\\
\begin{enumerate}
	\item Efficiency: $\sum_{i=1}^d \phi_i(v) = v(D) - v(\emptyset)$
	\item Symmetry: $v(S \cup \{i\}) = v(S \cup \{k\}$ for all S then $\alpha_i(v) = \phi_j(v)$
	\item Linearity: $v(S) = \sum_{k=1}^n c_k v_k (S)$ which is a linear combination of games $(v_1, \dots, v_n)$ has scores $\phi_i(v) = \sum_{k=1}^n c_k \phi_i (v_k)$
	\item Monotonicity: If for two games $v, v'$ we have $v(S \cup \{i\})-v(S) \geq v'(S\cup \{i\})-v'(S)$ for all $S$, then $\phi_i(v) \geq \phi_i(v')$
	\item Dummy: If $v(S) = v(S \cup \{i\})$ for all $S$, then $\phi_i(v) = 0$
\end{enumerate}
\end{document}